\def\eqref#1{equation~\ref{#1}}
\def\1{\bm{1}}
\DeclareMathAlphabet{\mathsfit}{\encodingdefault}{\sfdefault}{m}{sl}
\SetMathAlphabet{\mathsfit}{bold}{\encodingdefault}{\sfdefault}{bx}{n}
\newcommand{\sigmoid}{\sigma}
\newtheorem{theorem}{Theorem}[section]
\newtheorem{lemma}[theorem]{Lemma}
\title{Temporal Efficient Training of Spiking \\ Neural Network via Gradient Re-weighting}
\def\thanks#1{\protected@xdef\@thanks{\@thanks
        \protect\footnotetext{#1}}}
\author{Shikuang Deng$^{1,2}$, Yuhang Li$^{3}$, Shanghang Zhang$^{4}$ \& Shi Gu$^{1,2,5}$\textsuperscript{\Letter}\thanks{{\Letter} Corresponding author} \\
$^{1}$University of Electronic Science and Technology of China, \\
$^{2}$Shenzhen Institute for Advanced Study, UESTC \\
$^{3}$Yale University, $^{4}$Peking University ,$^{5}$Peng Cheng Laboratory \\
\texttt{dengsk119@std.uestc.edu.cn, yuhang.li@yale.edu, gus@uestc.edu.cn} \\
}
\begin{document}

\maketitle

\begin{abstract}

Recently, brain-inspired spiking neuron networks (SNNs) have attracted widespread research interest because of their event-driven and energy-efficient characteristics. Still, it is difficult to efficiently train deep SNNs due to the non-differentiability of its activation function, which disables the typically used gradient descent approaches for traditional artificial neural networks (ANNs). Although the adoption of surrogate gradient (SG) formally allows for the back-propagation of losses, the discrete spiking mechanism actually differentiates the loss landscape of SNNs from that of ANNs, failing the surrogate gradient methods to achieve comparable accuracy as for ANNs. In this paper, we first analyze why the current direct training approach with surrogate gradient results in SNNs with poor generalizability. Then we introduce the temporal efficient training (TET) approach to compensate for the loss of momentum in the gradient descent with SG so that the training process can converge into flatter minima with better generalizability. Meanwhile, we demonstrate that TET improves the temporal scalability of SNN and induces a temporal inheritable training for acceleration. Our method consistently outperforms the SOTA on all reported mainstream datasets, including CIFAR-10/100 and ImageNet. Remarkably on DVS-CIFAR10, we obtained  83$\%$ top-1 accuracy, over 10$\%$ improvement compared to existing state of the art. Codes are available at \url{https://github.com/Gus-Lab/temporal_efficient_training}.


\end{abstract}

\section{Introduction}
The advantages of Spiking neuron networks (SNNs) lie in their energy-saving and fast-inference computation when embedded on neuromorphic hardware such as TrueNorth \citep{debole2019truenorth} and Loihi \citep{davies2018loihi}. Such advantages originate from the biology-inspired binary spike transmitted mechanism, by which the networks avoid multiplication during inference. On the other hand, this mechanism also leads to difficulty in training very deep SNNs from scratch because the non-differentiable spike transmission hinders the powerful back-propagation approaches like gradient descents. Recently, many studies on converting artificial neuron networks (ANNs) to SNNs have demonstrated SNNs' comparable power in feature representation as ANNs \citep{han2020deep,deng2020optimal,li2021free}.  Nevertheless, it is commonly agreed that the direct training method for high-performance SNN is still crucial since it distinguishes SNNs from converted ANNs, especially on neuromorphic datasets. 

The output layer's spike frequency or the average membrane potential increment is commonly used as inference indicators in SNNs \citep{shrestha2018slayer,kim2019spiking}. The current standard direct training (SDT) methods regard the SNN as RNN and optimize inference indicators' distribution \citep{wu2018spatio}. They adopt surrogate gradients (SG) to relieve the non-differentiability \citep{lee2016training,wu2018spatio,zheng2021going}.
However, the gradient descent with SG does not match with the loss landscape in SNN and is easy to get trapped in a local minimum with low generalizability.
Although using suitable optimizers and weight decay help ease this problem, the performance of deep SNNs trained from scratch still suffers a big deficit compared to that of ANNs \cite{DENG2020294}. Another training issue is the memory and time consumption, which increases linearly with the simulation time. \cite{rathi2020diet} initializes the target network by a converted SNN to shorten the training epochs, indicating the possibility of high-performance SNN with limited activation time. The training problem due to the non-differentiable activation function has become the main obstruction of spiking neural network development.

In this work, we examine the limitation of the traditional direct training approach with SG and propose the temporal efficient training (TET) algorithm. Instead of directly optimizing the integrated potential, TET optimizes every moment's pre-synaptic inputs. 
As a result, it avoids the trap into local minima with low prediction error but a high second-order moment. Furthermore, since the TET applies optimization on each time point, the network naturally has more robust time scalability. Based on this characteristic, we propose the time inheritance training (TIT), which reduces the training time by initializing the SNN with a smaller simulation length. With the help of TET, the performance of SNNs has improved on both static datasets and neuromorphic datasets. Figure \ref{fig1:concept} depicts the workflow of our approach.

\begin{figure}
    \centering
    \includegraphics[width=1\linewidth]{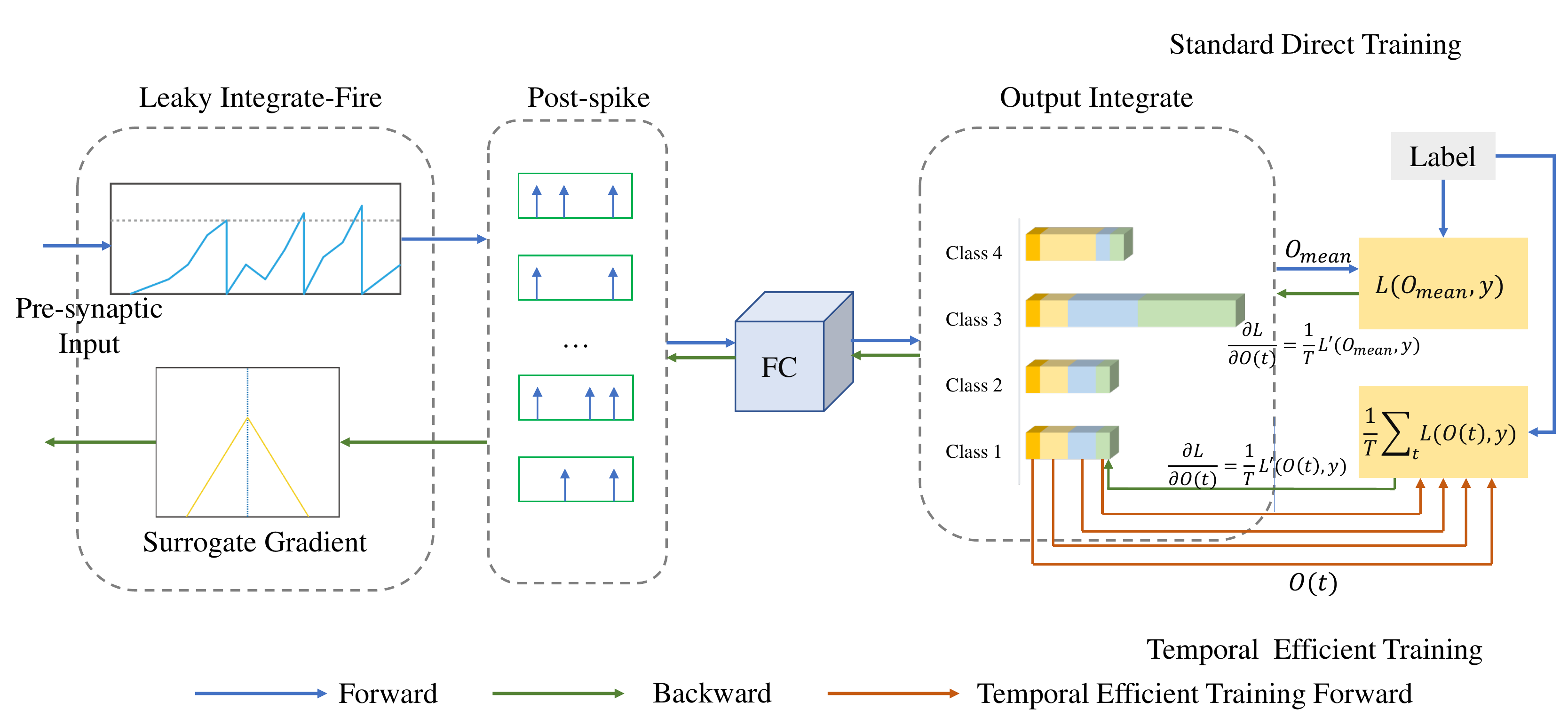}
    \caption{Workflow of temporal efficient training (TET). To obtain a more generalized SNN, we modify the optimization target to adjust each moment's output distribution. }
    \label{fig1:concept}
\end{figure}


The following summarizes our main contributions:
\begin{itemize}
	\item We analyze the problem of training SNN with SG and propose the TET method, a new loss and gradient descent regime that succeeds in obtaining more generalizable SNNs.
	\item We analyze the feasibility of TET and picture the loss landscape under both the SDT and TET setups to demonstrate TET's advantage in better generalization.
	\item Our sufficient experiments on both static datasets and neuromorphic datasets prove the effectiveness of the TET method. Especially on DVS-CIFAR10, we report $83.17\%$ top-1 accuracy for the first time, which is over $10\%$ better than the current state-of-the-art result.
\end{itemize}

\section{Related Work}
In recent years, SNNs have developed rapidly and received more and more attention from the research community. However, lots of challenging problems remain to be unsolved.
In general, most works on SNN training have been carried out in two strategies: ANN-to-SNN conversion and direct training from scratch.

\textbf{ANN-to-SNN Conversion.} Conversion approaches avoid the training problem by trading high accuracy through high latency. They convert a high-performing ANN to SNN and adjust the SNN parameters w.r.t the ANN activation value layer-by-layer \citep{Diehl2015Fast,diehl2016conversion}. Some special techniques have been proposed to reduce the inference latency, such as the subtraction mechanism \citep{rueckauer2016theory,han2020rmp}, robust normalization \cite{rueckauer2016theory}, spike-norm \citep{Sengupta2018Going}, and channel-wise normalization \citep{kim2019spiking}. Recently, \cite{deng2020optimal} decompose the conversion error to each layer and reduce it by bias shift. \cite{li2021free} suggest using adaptive threshold and layer-wise calibration to obtain high-performance SNNs that require a simulation length of less than 50. However, converted methods significantly extend the inference latency, and they are not suitable for neuromorphic data \citep{DENG2020294}.

\textbf{Direct training.} In this area, SNNs are regarded as special RNNs and training with BPTT \citep{neftci2019surrogate}.
On the backpropagation process, The non-differentiable activation term is replaced with a surrogate gradient \citep{lee2016training}.
Compared with ANN-to-SNN conversion, direct training achieves high accuracy with few time steps but suffers more training costs \citep{DENG2020294}. Several studies suggest that surrogate gradient (SG) is helpful to obtain high-performance SNNs on both static datasets and neuromorphic datasets \citep{wu2019direct,shrestha2018slayer, li2021differentiable}. On the backpropagation process, SG replaces the Dirac function with various shapes of curves.
Exceptionally, \cite{wu2018spatio} first propose the STBP method and train SNNs on the ANN programming platform, which significantly promotes direct training development.
\cite{zheng2021going} further proposes the tdBN algorithm to smooth the loss function and first realize training a large-scale SNN on ImageNet. \cite{zhang2020temporal} proposes TSSL-BP to break down error backpropagation across two types of inter-neuron and intra-neuron dependencies and achieve low-latency and high accuracy SNNs. Recently, \cite{yang2021backpropagated} designed a neighborhood aggregation (NA) method to use the multiple perturbed membrane potential waveforms in the neighborhood to compute the finite difference gradients and guide the weight updates. They significantly decrease the required training iterations and improve the SNN performance.

\section{Preliminary}
\subsection{Iterative LIF Model}
 We adopt the Leaky Integrate-and-Fire (LIF) model and translate it to an iterative expression with the Euler method \citep{wu2019direct}. Mathematically, the membrane potential is updated as
\begin{equation}\label{iterative}
    \bm{u}(t+1) = \tau \bm{u}(t) + \bm{I}(t),
\end{equation}
where $\tau$ is the constant leaky factor, $\bm{u}(t)$ is the membrane potential at time $t$, and $\bm{I}(t)$ denotes the pre-synaptic inputs, which is the product of synaptic weight $\mathbf{W}$ and spiking input $\bm{x}(t)$. Given a specific threshold $V_{th}$, the neuron fires a spike and $\bm{u}(t)$ reset to 0 when the $\bm{u}(t)$ exceeds the threshold. So the firing function and hard reset mechanism can be described as
\begin{equation}\label{fire}
    \bm{a}(t+1)=\bm{\Theta}(\bm{u}(t+1) - V_{th})
\end{equation}
\begin{equation}\label{hardreset}
\bm{u}(t+1) = \bm{u}(t+1) \cdot (1 - \bm{a}(t+1)),
\end{equation}
where $\bm{\Theta}$ denotes the Heaviside step function. The output spike $\bm{a}(t+1)$ will become the post synaptic spike and propagate to the next layer. In this study, we set the starting membrane $\bm{u}(0)$ to 0, the threshold $V_{th}$ to $1$, and the leaky factor $\tau$ to 0.5 for all experiments.

The last layer's spike frequency is typically used as the final classification index. However, adopting the LIF model on the last layer will lose information on the membrane potential and damage the performance, especially on complex tasks \citep{kim2019spiking}. Instead, we integrate the pre-synaptic inputs $\bm{I}(t)$ with no decay or firing \citep{rathi2020diet,fang2021deep}. Finally, we set the average membrane potential as the classification index and calculate the cross-entropy loss for training.

\subsection{Surrogate gradient}
Following the concept of direct training, we regard the SNN as RNN and calculate the gradients through spatial-temporal backpropagation (STBP) \citep{wu2018spatio}:
\begin{equation}
    \frac{\partial L }{\partial \mathbf{W}} = \sum_t \frac{\partial L}{\partial \bm{a}(t)}\frac{\partial \bm{a}(t)}{\partial \bm{u}(t)}\frac{\partial \bm{u}(t)}{\partial \bm{I}(t)}\frac{\partial \bm{I}(t)}{\partial \mathbf{W}},
\end{equation}
where the term $\frac{\partial \bm{a}(t)}{\partial \bm{u}(t)}$ is the gradient of the non-differentiability step function involving the derivative of Dirac's $\delta$-function that is typically replaced by surrogate gradients with a derivable curve. So far, there are various shapes of surrogate gradients, such as rectangular \citep{wu2018spatio,wu2019direct}, triangle \citep{esser2016convolutional,rathi2020diet}, and exponential \citep{shrestha2018slayer} curve.
In this work, we choose the surrogate gradients shaped like triangles. Mathematically, it can describe as
\begin{equation}
    \frac{\partial \bm{a}(t)}{\partial \bm{u}(t)} = \frac{1}{\gamma^2}\text{max}(0,\gamma-|\bm{u}(t)-V_{th}|),
\end{equation}
where the $\gamma$ denotes the constraint factor that determines the sample range to activate the gradient. 

\subsection{Batch Normalization for SNN}
Batch Normalization (BN) \citep{ioffe2015batch} is beneficial to accelerate training and increase performance since it can smooth the loss landscape during training \citep{santurkar2018does}. 
\cite{zheng2021going} modified the forward time loop form and proposed threshold-dependent Batch Normalization (tdBN) to normalize the pre-synaptic inputs $\bm{I}$ in both spatial and temporal paradigms so that the BN can support spatial-temporal input. We adopt this setup with the extension of the time dimension to batch dimension \footnote{\url{https://github.com/fangwei123456/spikingjelly}}. In the inference process, the BN layer will be merged into the pre-convolutional layer, thus the inference rule of SNN remain the same but with modified weight:
\begin{equation}
    \hat{\mathbf{W}} \leftarrow \mathbf{W}\frac{\gamma}{\alpha},    \hat{\bm{b}}\leftarrow \beta + (\bm{b}-\mu )\frac{\gamma}{\alpha},
\end{equation}
where $\mu, \alpha$ are the running mean and standard deviation on both spatial and temporal paradigm, $\gamma,\beta$ are the affine transformation parameters, and $\mathbf{W}, \bm{b}$ are the parameters of the pre-convolutional layer.

\section{Methodology}
\subsection{Formula of Training SNN with surrogate gradients}
\textbf{Standard Direct Training.} We use $\bm{O}(t)$ to represent pre-synaptic input $\bm{I}(t)$ of the output layer and calculate the cross-entropy loss. The loss function of standard direct training $\mathcal{L}_\text{SDT}$ is:
\begin{equation}
    \mathcal{L}_\text{SDT} = \mathcal{L}_\text{CE}(\frac{1}{T}\sum_{t=1}^{T}\bm{O}(t),\bm{y}),
\end{equation}
where $T$ is the total simulation time, $\mathcal{L}_\text{CE}$ denotes the cross-entropy loss, and $\bm{y}$ represents the target label. Following the chain rule, we obtain the gradient of $\textbf{W}$ with softmax $S(\cdot)$ inference function :
\begin{equation}
\label{eq:sdt_dw}
    \frac{\partial \mathcal{L}_\text{SDT}}{\partial \textbf{W}} = \frac{1}{T}\sum_{t=1}^{T}[S(\bm{O}_\text{mean})-\hat{\bm{y}}]\frac{\partial \bm{O}(t)}{\partial \textbf{W}},
\end{equation}
where $\bm{O}_\text{mean}$ denotes the average of the output $\bm{O}(t)$ over time, and $\hat{\bm{y}}$ is the one-hot coding of $\bm{y}$.

\textbf{Temporal Efficient Training.} 
In this section, we come up with a new kind of loss function $\mathcal{L}_\text{TET}$ to realize temporal efficient training (TET). It constrains the output (pre-synaptic inputs) at each moment to be close to the target distribution. It is described as:
\begin{equation}\label{L1loss}
    \mathcal{L}_\text{TET} = \frac{1}{T}\cdot\sum_{t=1}^{T}\mathcal{L}_\text{CE}[\bm{O}(t),\bm{y}].
\end{equation}
Recalculate the gradient of weights under the loss function $\mathcal{L}_\text{TET}$, and we have:
\begin{equation}
\label{eq:tet_dw}
    \frac{\partial \mathcal{L}_\text{TET}}{\partial \textbf{W}} =\frac{1}{T}\sum_{t=1}^{T}[S(\bm{O}(t))-\hat{\bm{y}}]\cdot\frac{\partial \bm{O}(t)}{\partial \textbf{W}}.
\end{equation}

\subsection{Convergence of Gradient Descent for SDT v.s. TET}
In the case of SDT, the gradient consists of two parts, the error term $(S(\bm{O}_\text{mean})-\hat{\bm{y}})$ and the partial derivative of output ${\partial \bm{O}(t)}/{\partial \textbf{W}}$. When the training process reaches near a local minimum, the term $(S(\bm{O}_\text{mean})-\hat{\bm{y}})$ approximates $\bm{0}$ for all $t=1,...,T$, ignorant of the term ${\partial \bm{O}(t)}/{\partial \textbf{W}}$. For traditional ANNs, the accumulated momentum may help get out of the local minima (e.g. saddle point) that typically implies bad generalizability \citep{kingma2014adam,kidambi2018insufficiency}. However, when the SNN is trained with surrogate gradients, the accumulated momentum could be extremely small, considering the mismatch of gradients and losses. The fact that the activation function is a step one while the SG is bounded with integral constraints. This mismatch dissipates the momentum around a local minimum and stops the SDT from searching for a flatter minimum that may suggest better generalizability.

In the case of TET, this issue of mismatch is relieved by reweighting the contribution of ${\partial \bm{O}(t)}/{\partial \textbf{W}}$. Indeed, considering the fact that the first term $(S(\bm{O}(t))-\hat{\bm{y}})$ is impossible to be $\bm{0}$ at every moment of SNN since the early output accuracy on the training set is not 100$\%$. So TET needs the second term ${\partial \bm{O}(t)}/{\partial \textbf{W}}$ close to 0 to make the $\mathcal{L}_\text{TET}$ convergence. This mechanism increases the norm of gradients around sharp local minima and drives the TET to search for a flat local minimum where the disturbance of weight does not cause a huge change in $\bm{O}(t)$.

Further, to ensure that the convergence with TET implies the convergence of SDT, we prove the following lemma:

\begin{lemma}
 $\mathcal{L}_\text{SDT}$ is upper bounded by $\mathcal{L}_\text{TET}$.
\end{lemma}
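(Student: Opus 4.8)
The plan is to recognize the claimed inequality as a direct instance of Jensen's inequality, which in turn requires establishing that the softmax cross-entropy loss is convex as a function of its logit argument $\bm{O}$. Writing $\bm{z}$ for a generic logit vector and letting $y$ denote the target class index, the softmax cross-entropy admits the closed form $\mathcal{L}_\text{CE}(\bm{z},\bm{y}) = -z_y + \log\sum_k e^{z_k}$. The first term is affine in $\bm{z}$, so the convexity of the whole expression reduces to the convexity of the log-sum-exp function $f(\bm{z}) = \log\sum_k e^{z_k}$.

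First I would verify this convexity by computing the Hessian of $f$. A direct calculation gives $\nabla^2 f(\bm{z}) = \mathrm{diag}(\bm{p}) - \bm{p}\bm{p}^\top$, where $\bm{p} = S(\bm{z})$ is the softmax vector. For any direction $\bm{v}$, the associated quadratic form equals $\sum_k p_k v_k^2 - \left(\sum_k p_k v_k\right)^2$, which is precisely the variance of the coordinates of $\bm{v}$ under the probability distribution $\bm{p}$ and is therefore nonnegative. This shows the Hessian is positive semidefinite, so $f$, and hence $\mathcal{L}_\text{CE}(\cdot,\bm{y})$, is convex in the logits.

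With convexity in hand, Jensen's inequality applied to the temporal average $\frac{1}{T}\sum_{t=1}^{T}\bm{O}(t)$ yields
\[
\mathcal{L}_\text{CE}\!\left(\frac{1}{T}\sum_{t=1}^{T}\bm{O}(t),\bm{y}\right) \;\leq\; \frac{1}{T}\sum_{t=1}^{T}\mathcal{L}_\text{CE}\!\left(\bm{O}(t),\bm{y}\right),
\]
whose left-hand side is exactly $\mathcal{L}_\text{SDT}$ and whose right-hand side is exactly $\mathcal{L}_\text{TET}$, which completes the argument.

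The main obstacle is not conceptual but lies entirely in the convexity verification of the softmax cross-entropy; once convexity is secured, the conclusion follows from a single invocation of Jensen. As an alternative to the explicit Hessian computation, one could simply cite the well-known fact that log-sum-exp is convex (being the convex conjugate of the negative entropy over the probability simplex), which would shorten the proof at the cost of relying on an external fact rather than keeping the argument self-contained.
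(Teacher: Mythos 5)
Your proof is correct, and it takes a genuinely different (and in fact cleaner) route than the paper's. You prove convexity of the softmax cross-entropy directly in the logit argument, writing $\mathcal{L}_\text{CE}(\bm{z},\bm{y})=-z_y+\log\sum_k e^{z_k}$, checking that the log-sum-exp Hessian $\mathrm{diag}(\bm{p})-\bm{p}\bm{p}^\top$ is positive semidefinite via the variance identity, and then finish with a single application of Jensen's inequality to the temporal average $\frac{1}{T}\sum_{t=1}^{T}\bm{O}(t)$. The paper instead expands $\mathcal{L}_\text{TET}$ component-wise, rewrites the time-average of log-probabilities as the log of a geometric mean, applies the AM--GM inequality to reach the intermediate quantity $-\sum_i\hat{\bm{y}}_i\log\bigl(\frac{1}{T}\sum_t S(\bm{O}_i(t))\bigr)$, and then invokes ``Jensen since the softmax function is convex'' to pass to $\mathcal{L}_\text{SDT}$. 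Your route avoids a genuine weakness in that second step: as justified there, it requires $\frac{1}{T}\sum_t S_i(\bm{O}(t))\leq S_i\bigl(\frac{1}{T}\sum_t\bm{O}(t)\bigr)$ for the target coordinate $i$, but individual softmax components are neither convex nor concave, and this intermediate inequality can fail --- e.g.\ with two classes and target-logit margins $-2$ and $-1$ (a region where the logistic sigmoid is convex), the mean of probabilities is $\approx 0.194$ while the probability of the mean is $\approx 0.182$, so the paper's penultimate quantity drops strictly below $\mathcal{L}_\text{SDT}$ even though the lemma itself still holds ($\mathcal{L}_\text{TET}\approx 1.720\geq\mathcal{L}_\text{SDT}\approx 1.701$ in that example). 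The convexity statement actually needed is precisely the one you establish --- convexity of the composite $\bm{z}\mapsto-\log S_y(\bm{z})$, i.e.\ log-sum-exp minus an affine term --- so your single-Jensen argument not only proves the lemma but also repairs the paper's intermediate chain; what the paper's decomposition offers in exchange is only an explicit view of per-class probability averages, at the cost of that unsound middle step.
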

\begin{proof}
Suppose $\bm{O}_i(t)$ and $\hat{\bm{y}}_i$ denote the i-th component of $\bm{O}(t)$ and $\hat{\bm{y}}$, respectively. Expand Eqn.\ref{L1loss}, we have:
\begin{align}
    \mathcal{L}_\text{TET} &= -\frac{1}{T}\sum_{t=1}^{T}\sum_{i=1}^{n}\hat{\bm{y}}_i\log S(\bm{O}_i(t))
    = -\frac{1}{T}\sum_{i=1}^{n}\hat{\bm{y}_i}\log(\prod_{t=1}^{T}S(\bm{O}_i(t))) \notag\\
    &= -\sum_{i=1}^{n}\hat{\bm{y}_i}\log(\prod_{t=1}^{T}S(\bm{O}_i(t)))^{\frac{1}{T}}
    \geq -\sum_{i=1}^{n}\hat{\bm{y}_i}\log(\frac{1}{T}\sum_{t=1}^{T}S(\bm{O}_i(t)))\notag\\
    &\geq -\sum_{i=1}^{n}\hat{\bm{y}_i}\log(S(\frac{1}{T}\sum_{t=1}^{T}\bm{O}_i(t))) = \mathcal{L}_\text{SDT},
\end{align}
where the first inequality is given by the Arithmetic Mean-Geometric Mean Inequality, and the second one is given by Jensen Inequality since the softmax function is convex. As a corollary, once the $\mathcal{L}_\text{TET}$ gets closed to zero, the original loss function $\mathcal{L}_\text{SDT}$ also approaches zero.
\end{proof}

Furthermore, the network output $\bm{O}(t)$ at a particular time point may be a particular outlier that dramatically affects the total output since the output of the SNN has the same weight at every moment under the rule of integration. Thus it is necessary to add a regularization term like $\mathcal{L}_\text{MSE}$ loss to confine each moment's output to reduce the risk of outliers:
\begin{equation}
    \mathcal{L}_{\text{MSE}}=\frac{1}{T}\sum_{t=1}^{T}\text{MSE}(\textbf{O}(t),\phi),
\end{equation}
where $\phi$ is a constant used to regularize the membrane potential distribution. And we set $\phi=V_{th}$ in our experiments. In practice, we use a hyperparameter $\lambda$ to adjust the proportion of the regular term, we have:
\begin{equation}
    \mathcal{L}_{\text{TOTAL}}=(1-\lambda)\mathcal{L}_\text{TET} + \lambda \mathcal{L}_\text{MSE}.
\end{equation}
It is worth noting that we only changed the loss function in the training process and did not change SNN's inference rules in the testing phase for a fair comparison. This algorithm is detailed in Algo.\ref{algo1}.
\begin{algorithm}[t]
\caption{Temporal efficient training for one epoch}
\label{algo1}
    \KwIn{SNN model; Simulation length: $T$; Threshold: $V_{th}$; Training dataset; Validation dataset; total training iteration in one epoch: $I_{train}$; total validation iteration in one epoch: $I_{val}$}
    \For{all $i = 1,2,...I_{train}$ iteration}
    {
        Get mini-batch training data, and class label: $\bm{Y}^i$\;
        Compute the SNN output $\textbf{O}^i(t)$ of eatch time step\;
        Calculate loss function: $\mathcal{L}_{\text{TOTAL}}=(1-\lambda)\mathcal{L}_\text{TET}+\lambda\mathcal{L}_\text{MSE} = (1-\lambda)\cdot\frac{1}{T}\sum_{t=1}^{T}\mathcal{L}_{CE}(\textbf{O}^i(t),\bm{Y}^i)+ \lambda\cdot\frac{1}{T}\sum_{t=1}^{T}\text{MSE}(\textbf{O}^i(t),\phi)$\;
        Backpropagation and update model parameters\;
    }
    \For{all $i = 1,2,...I_{val}$ iteration}
    {
        Get mini-batch validation data, and class label: $\bm{Y}^i$\;
        Compute the SNN average output $\textbf{O}_{\text{mean}}^i=\frac{1}{T}\sum_{t=1}^{T}\textbf{O}^i(t)$ over all time step\;
        Compare the classification factor $\textbf{O}_{\text{mean}}^i$ and $\bm{Y}^i$ for classification\;

    }
\end{algorithm}

\subsection{Time Inheritance Training}
SNN demands simulation length long enough to obtain a satisfying performance, but the training time consumption will increase linearly as the simulation length grows. So how to shorten the training time is also an essential problem in the direct training field. Traditional loss function $\mathcal{L}_{\text{SDT}}$ only optimizes the whole network output under a specific $T$, so its temporal scalability is poor. Unlike the standard training, TET algorithm optimizes each moment's output, enabling us to extend the simulation time naturally. We introduce Time Inheritance Training (TIT) to alleviate the training time problem.
We first use long epochs to train an SNN with a short simulation time T, e.g., 2. Then, we increase the simulation time to the target value and retrain with short epochs. We discover that TIT performs better than training from scratch on accuracy and significantly saves the training time. Assuming that training an SNN with simulation length $T=1$ cost $ts$ time per epoch, the SNN needs 300 epochs to train from scratch, and the TIT needs 50 epochs for finetuning. So we need 1800$ts$ time to train an SNN with $T=6$ from scratch, but following the TIT pipeline with the initial $T=2$ only requires 900$ts$. As a result, the TIT can reduce the training time cost by half.

\section{Experiments}
We validate our proposed TET algorithm and compare it with existing works on both static and neuromorphic datasets. The network architectures in this paper include ResNet-19 \citep{zheng2021going}, Spiking-ResNet34 \citep{zheng2021going}, SEW-ResNet34 \citep{fang2021deep}, SNN-5, and VGGSNN. SNN-5 (16C3-64C5-AP2-128C5-AP2-256C5-AP2-512C3-AP2-FC) is a simple convolutional SNN suitable for multiple runs to discover statistical rules (Figure A. \ref{fitacc}). The architecture of VGGSNN (64C3-128C3-AP2-256C3-256C3-AP2-512C3-512C3-AP2-512C3-512C3-AP2-FC) is based on VGG11 with two fully connected layers removed as we found that additional fully connected layers were unnecessary for neuromorphic datasets.

\subsection{Model Validation and Ablation Study}

\textbf{Effectiveness of TET over SDT with SG.} We first examine whether the mismatch between SG and loss causes the convergence problem. For this purpose, we set the simulation length to 4 and change the spike function $\bm{\Theta}$ in Eqn.\ref{fire} to Sigmoid $\sigmoid(k\cdot \text{input})$. We find that the TET and SDT achieved similar accuracy (Table \ref{Table2}) when $k=1,10,20$. This indicates that both TET and SDT work when the gradient and loss function match each other. Next, we compare the results training with $\mathcal{L}_{\text{SDT}}$ and $\mathcal{L}_{\text{TET}}$ on SNNs (ResNet-19 on CIFAR100) training with surrogate gradient for three runs. As shown in Table \ref{Table1}, our proposed new TET training strategy dramatically increases the accuracy by 3.25$\%$ when the simulation time is 4 and 3.53$\%$ when the simulation time is 6. These results quantitatively support the effectiveness of TET in solving the mismatch between gradient and loss in training SNNs with SG.

\begin{figure}
    \centering
    \includegraphics[width=1\linewidth]{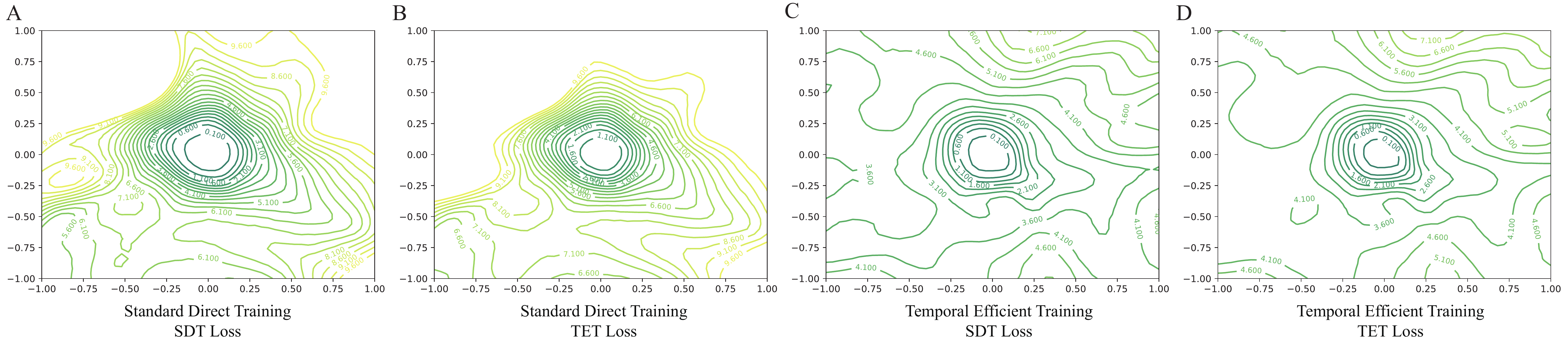}
    \caption{Loss landscape of VGGSNN. The 2D landscape of $\mathcal{L}_\text{SDT}$ and $\mathcal{L}_\text{TET}$ from two different training methods.}
    \label{fig:landscape}
\end{figure}

\textbf{Loss Landscape around Local Minima.} We further inspect the 2D landscapes \citep{li2018visualizing} of $\mathcal{L}_\text{SDT}$ and $\mathcal{L}_\text{TET}$ around their local minima (see Figure.~\ref{fig:landscape}) to demonstrate why TET generalizes better than SDT and how TET helps the training process jump out of the sharp local minima typically found by SDT. First, comparing Figure.~\ref{fig:landscape} A and C, we can see that although the values of local minima achieved by SDT and TET are similar in $\mathcal{L}_\text{SDT}$, the local minima of TET (Figure.~\ref{fig:landscape} C) is flatter than that of SDT (Figure.~\ref{fig:landscape} A). This indicates that the TET is effective in finding flatter minima that are typically more generalizable even w.r.t the original loss in TET. Next, we examine the two local minima under $\mathcal{L}_\text{TET}$ to see how it helps jump out the local minima found by SDT. When comparing Figure.~\ref{fig:landscape} B and D, we observe that the local minima found by SDT (Figure.~\ref{fig:landscape} B) is not only sharper than that found by TET (Figure.~\ref{fig:landscape} D) under $\mathcal{L}_\text{SDT}$ but also maintains a higher loss value. This supports our claim that TET loss cannot be easily minimized around sharp local minima (Figure.~\ref{fig:landscape} B), thus preferable to converge into flatter local minima (Figure.~\ref{fig:landscape} D). Put together, the results here provide evidence for our reasoning in Section 4.2.

\begin{table}[]
\begin{minipage}[t]{0.48\textwidth}
\flushleft
\caption{Comparison between SDT and TET. We adopt the SNN architecture ResNet-19 with SG on CIFAR100 and record the results with three different simulation lengths 2, 4, and 6.}
\label{Table1}
\resizebox{\textwidth}{!}{
\begin{tabular}{cccc}
\hline
Method & T=2& T=4 & T = 6\\
\hline
Direct training &69.41$\pm$0.08&70.86$\pm$0.22 &71.12$\pm$0.57\\
TET &72.37$\pm$0.21&74.11$\pm$0.18 & 74.65$\pm$0.12\\
\hline
\end{tabular}}
\end{minipage}
\hfill
\begin{minipage}[t]{0.48\textwidth}
\flushright
\caption{Comparison of SDT and TET with sigmoid function $\sigmoid(k\cdot \text{input})$. We fix the simulation length to 4 and record the results of CNN-5 under three different $k$ on CIFAR10.}
\label{Table2}
\resizebox{\textwidth}{!}{
\begin{tabular}{cccc}
\hline
Method & k=1 & k=10 & k=20\\
\hline
Direct training &88.00$\pm$0.15 &88.83$\pm$0.32&88.50$\pm$0.32\\
TET &87.63$\pm$0.38 &89.31$\pm$0.15 &88.64$\pm$0.28\\
\hline
\end{tabular}}
\end{minipage}
\end{table}

\begin{figure}
    \centering
    \includegraphics[width=0.85\linewidth]{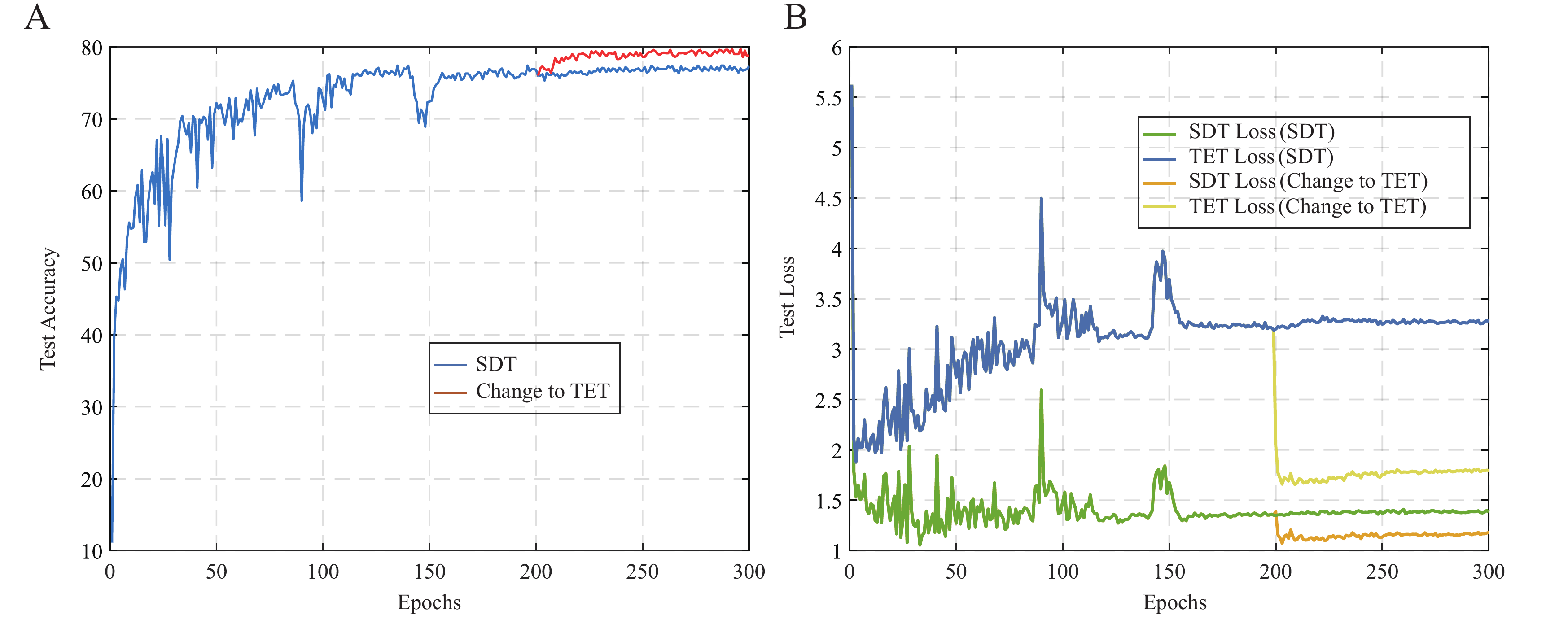}
    \caption{TET helps to jump out the local minimum point. We provide the test accuracy \textit{(A)} and loss \textit{(B)} change after changing the SDT to TET at epoch 200. TET efficiently improves the test performance and reduces the two kinds of loss.}
    \label{fig:SDTtoTET}
\end{figure}

\textbf{Training from SDT to TET.} In this part, we further validate the ability of TET to escape from the local minimum found by SDT. We adopt the VGGSNN with 300 epochs training on DVS-CIFAR10. First, we optimize $\mathcal{L}_\text{SDT}$ for 200 epochs and then change the loss function to $\mathcal{L}_\text{TET}$ after epoch 200. Figure \ref{fig:SDTtoTET} demonstrates the accuracy and loss change on the test set. After 200 epochs training, SDT gets trapped into a local minimum, and the $\mathcal{L}_\text{SDT}$ no longer decreases. The $\mathcal{L}_\text{TET}$ is much higher than $\mathcal{L}_\text{SDT}$ since SDT does not optimize it. Nevertheless, after we change the loss function to $\mathcal{L}_\text{TET}$, the $\mathcal{L}_\text{TET}$ and $\mathcal{L}_\text{SDT}$ on the test set both have a rapid decline. This phenomenon illustrates the TET ability to help the SNN efficiently jump out of the local minimum with poor generalization and find another flatter local minimum.

\textbf{Time Scalability Robustness.} Here, we study the time scalability robustness of SNNs trained with TET ($\mathcal{L}_\text{TET}$). First, we use 300 epochs to train a small simulation length ResNet-19 on CIFAR100 as the initial SNN. Then, we directly change the simulation length from 2 to 8 without finetuning and report the network accuracy on the test set. Figure.~\ref{fig:TIT}. A displays the results after changing the simulation length. We use 2, 3, and 4, respectively, as the simulation length of the initial network. When we increase the simulation length, the accuracy of all networks gradually increases. After the simulation time reaches a certain value, the network performance will slightly decrease. Interestingly, SNNs trained from scratch (T=4 and T=6) are not as good as those trained following the TIT procedure. 

\begin{figure}
    \centering
    \includegraphics[width=0.85\linewidth]{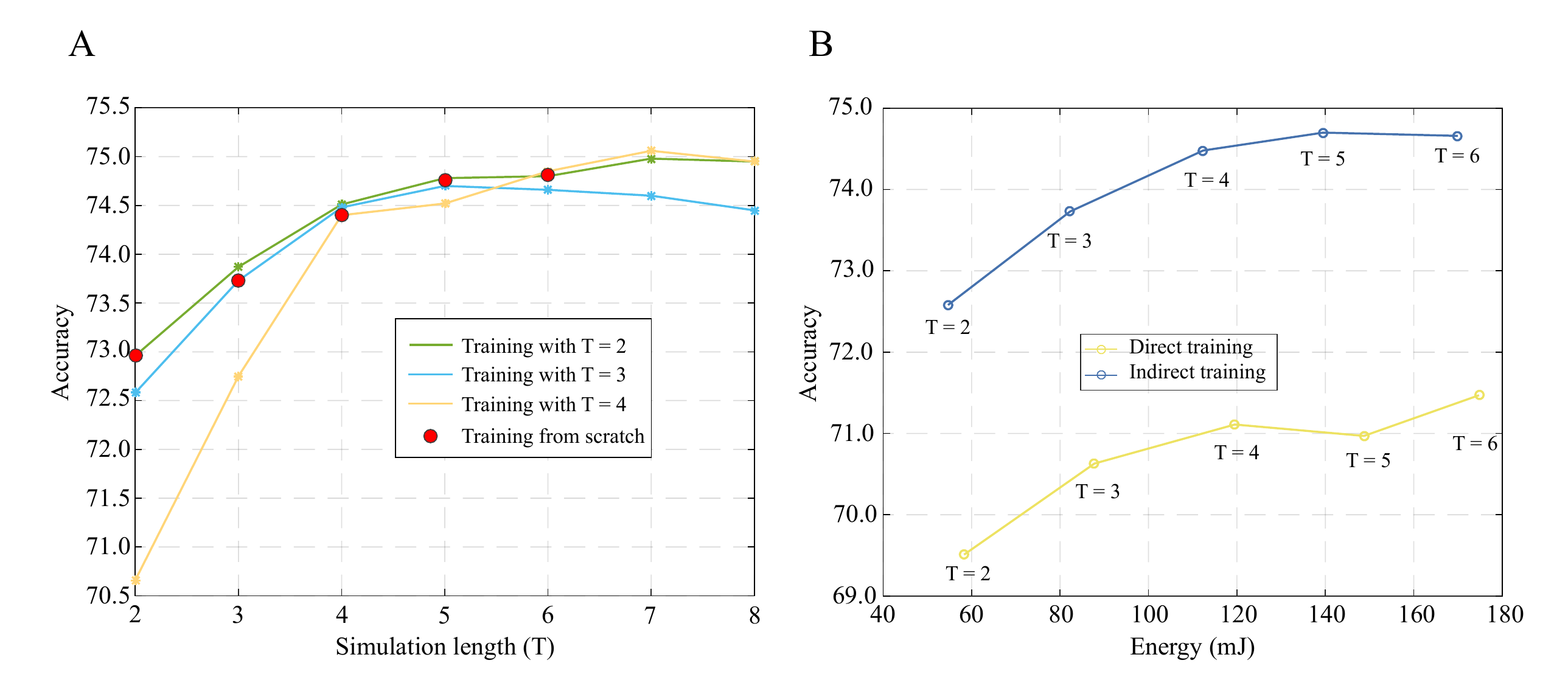}
    \caption{Time scalability robustness and network efficiency of ResNet-19 on CIFAR100. \textit{(A)} The comparison of training from scratch (dots) and inheriting from a small simulation length (lines). \textit{(B)} SNN network performance changes with energy consumption.}
    \label{fig:TIT}
\end{figure}

\textbf{Network Efficiency.} In this section, we measure the relationship between energy consumption and network performance. SNN avoids multiplication on the inference since its binary activation and event-based operation. The addition operation in SNN costs 0.9$pJ$ energy while multiplication operation consumes 4.6$pJ$ measured in 45nm CMOS technology \citep{rathi2020diet}. In our SNN model, the first layer has multiplication operations, while the other layers only have addition operations. Figure \ref{fig:TIT}. B summarizes the results of different simulation times. In all cases, the SNN obtained by TET has higher efficiency.

\subsection{Comparison to exiting works}
In this section, we compare our experimental results with previous works. We validate the full TIT algorithm ($\mathcal{L}_{\text{TOTAL}}$) both on the static dataset and neuromorphic dataset. All of the experiment results are summarized in Table \ref{Table3}. We specify all the training details in the appendix \ref{TrainDetail}.

\begin{table}[t]\label{Table3}
\caption{Compare with existing works. Our method improves network performance across all tasks. * denotes self-implementation results. $^\dagger$ denotes data augmentation \citep{li2022neuromorphic}.}
\begin{center}
\resizebox{\textwidth}{!}{
\begin{tabular}{cccccc}
\hline
\multicolumn{1}{c}{\bf Dataset}  &\multicolumn{1}{c}{\bf Model} &\multicolumn{1}{c}{\bf Methods} &\multicolumn{1}{c}{\bf Architecture} &\multicolumn{1}{c}{\bf Simulation Length} &\multicolumn{1}{c}{\bf Accuracy}\\
\hline
\multicolumn{1}{c}{\multirow{12}{*}{CIFAR10}}
    & \cite{rathi2019enabling} &Hybrid training &ResNet-20 &250 &92.22\\
    &\cite{rathi2020diet} &Diet-SNN &ResNet-20  &10  & 92.54\\
    &\cite{wu2018spatio} &STBP &CIFARNet &12 & 89.83\\
    &\cite{wu2019direct} &STBP NeuNorm &CIFARNet &12 &90.53\\
    &\cite{zhang2020temporal} &TSSL-BP &CIFARNet &5 &91.41\\
    \cline{2-6}
    &\multicolumn{1}{c}{\multirow{3}{*}{\cite{zheng2021going}}} &\multicolumn{1}{c}{\multirow{3}{*}{STBP-tdBN}} &\multicolumn{1}{c}{\multirow{3}{*}{ResNet-19}} & 6 & 93.16\\
    &&&& 4 & 92.92\\
    &&&& 2 & 92.34\\
    \cline{2-6}
    &\multicolumn{1}{c}{\multirow{3}{*}{\textbf{our model}}} &\multicolumn{1}{c}{\multirow{3}{*}{TET}} &\multicolumn{1}{c}{\multirow{3}{*}{ResNet-19}} & 6 & \textbf{94.50}$\pm$0.07\\
    &&&& 4 & \textbf{94.44}$\pm$0.08\\
    &&&& 2 & \textbf{94.16}$\pm$0.03\\
    \cline{2-6}
    &ANN* &ANN &ResNet-19 &1 &94.97\\
    \hline
    \multicolumn{1}{c}{\multirow{9}{*}{CIFAR100}}
    & \cite{rathi2019enabling} &Hybrid training &VGG-11 &125 &67.87\\
    &\cite{rathi2020diet} &Diet-SNN &ResNet-20  &5  & 64.07\\
    \cline{2-6}
    &\multicolumn{1}{c}{\multirow{3}{*}{\cite{zheng2021going}*}} &\multicolumn{1}{c}{\multirow{3}{*}{STBP-tdBN}} &\multicolumn{1}{c}{\multirow{3}{*}{ResNet-19}} & 6 & 71.12$\pm$0.57\\
    &&&& 4 & 70.86$\pm$0.22\\
    &&&& 2 & 69.41$\pm$0.08\\
    \cline{2-6}
    &\multicolumn{1}{c}{\multirow{3}{*}{\textbf{our model}}} &\multicolumn{1}{c}{\multirow{3}{*}{TET}} &\multicolumn{1}{c}{\multirow{3}{*}{ResNet-19}} & 6 & \textbf{74.72}$\pm$0.28\\
    &&&& 4 & \textbf{74.47}$\pm$0.15\\
    &&&& 2 & \textbf{72.87}$\pm$0.10\\
    \cline{2-6}
     &ANN* &ANN &ResNet-19 &1 &75.35\\
    \hline
    \multicolumn{1}{c}{\multirow{6}{*}{ImageNet}}  &\cite{rathi2019enabling} &Hybrid training &ResNet-34 &250 &61.48\\
    &\cite{Sengupta2018Going} &SPIKE-NORM &ResNet-34 & 2500 & 69.96 \\
    &\cite{zheng2021going} &STBP-tdBN &Spiking-ResNet-34 & 6 & 63.72 \\
    &\cite{fang2021deep} & SEW ResNet & SEW-ResNet-34 &4 & 67.04 \\
    \cline{2-6}
    &\multicolumn{1}{c}{\multirow{2}{*}{\textbf{our model}}} & TET & Spiking-ResNet-34 &6 & \textbf{64.79} \\
    && TET & SEW-ResNet-34 &4 & \textbf{68.00} \\
    \hline
    \multicolumn{1}{c}{\multirow{5}{*}{DVS-CIFAR10}}
    &\cite{zheng2021going} &STBP-tdBN &ResNet-19 & 10 & 67.8 \\
    &\cite{kugele2020efficient} &Streaming Rollout & DenseNet & 10 & 66.8 \\
    &\cite{wu2021liaf} &Conv3D & LIAF-Net & 10 & 71.70 \\
    &\cite{wu2021liaf} &LIAF & LIAF-Net & 10 & 70.40 \\
    \cline{2-6}
    &\multicolumn{1}{c}{\multirow{2}{*}{\textbf{our model}}} &TET & VGGSNN & 10 & \textbf{77.33}$\pm$0.21 \\
    &&TET$^\dagger$ & VGGSNN & 10 & \textbf{83.17}$\pm$0.15 \\
    \hline
\end{tabular}}
\end{center}
\end{table}

\textbf{CIFAR.}
We apply TET and TIT algorithm on CIFAR \citep{krizhevsky2009learning}, and report the mean and standard deviation of 3 runs under different random seeds. The $\lambda$ is set to $0.05$. On CIFAR10, our TET method achieves the highest accuracy above all existing approaches. Even when $T=2$, there is a 1.82$\%$ increment compare to STBP-tdBN with simulation length $T=6$. It is worth noting that our method is only 0.47$\%$ lower than the ANN performance. TET algorithm demonstrates a more excellent ability on CIFAR100. It has an accuracy increase greater than 3$\%$ on all report simulation lengths. In addition, when $T=6$, the reported accuracy is only 0.63$\%$ lower than that of ANN. We can see that the proposed TET's improvement is even higher on complex data like CIFAR100, where the generalizability of the model distinguishes a lot among minima with different flatness.


\textbf{ImageNet.}
The training set of ImageNet \citep{Krizhevsky2012ImageNet}  provides 1.28k training samples for each label. We choose the two most representative ResNet-34 to verify our algorithm on ImageNet with $\lambda= 0.001$. SEW-ResNet34 is not a typical SNN since it adopts the IF model and modifies the Residual structure. Although we only train our model for 120 epochs, the TET algorithm achieves a 1.07$\%$ increment on Spiking-ResNet-34 and a 0.96$\%$ increment on SEW-ResNet34.

\textbf{DVS-CIFAR10.}
The neuromorphic datasets suffer much more noise than static datasets. Thus the well-trained SNN is easier to overfit on these datasets than static datasets. DVS-CIFAR10 \citep{li2017cifar10}, which provides each label with 0.9k training samples, is the most challenging mainstream neuromorphic dataset. Recent works prefer to deal with this dataset by complex architectures, which are more susceptible to overfitting and do not result in very high accuracy. Here, we adopt VGGSNN on the DVS-CIFAR10 dataset, set $\lambda=0.001$, and report the mean and standard deviation of 3 runs under different random seeds. Along with data augmentation methods, VGGSNN can achieve an accuracy of 77.4$\%$. Then we apply the TET method to obtain a more generalizable optima. The accuracy rises to 83.17$\%$. Our TET method outperforms existing state-of-the-art by 11.47$\%$ accuracy. Without data augmentation methods, VGGSNN obtains 73.3$\%$ accuracy by SDT and 77.3$\%$ accuracy by TET.

\section{Conclusion}
This paper focuses on the SNN generalization problem, which is described as the direct training SNN performs well on the training set but poor on the test set. We find this phenomenon is due to the incorrect SG that makes the SNN easily trapped into a local minimum with poor generalization. To solve this problem, we propose the temporal efficient training algorithm (TET). Extensive experiments verify that our proposed method consistently achieves better performance than the SDT process. Furthermore, TET significantly improves the time scalability robustness of SNN, which enables us to propose the time inheritance training (TIT) to significantly reduce the training time consumption by almost a half.

\section{Acknowledgment}
This project is supported by NSFC 61876032 and JCYJ20210324140807019. Y. Li completed this work during his prior research assistantship in UESTC.

\nocite{kim2021visual}

\bibliography{iclr2022_conference}
\bibliographystyle{iclr2022_conference}

\newpage
\appendix
\section{Appendix}
\subsection{Dataset and Training detail}\label{TrainDetail}
\textbf{CIFAR.} The CIFAR dataset \citep{krizhevsky2009learning} consists of 50k training images and 10k testing images with the size of $32\times32$. We use ResNet-19 for both CIFAR10 and CIFAR100. Moreover, random horizontal flip and crop are applied to the training images the augmentation. First, we use 300 epoch to train the SNN with the simulation length $T=2$. We use an Adam optimizer with a learning rate of 0.01 and cosine decay to 0. Next, following the TIT algorithm, we increase the simulation time (to 4 and 6) and continue training the SNN for only 50 epochs, with the learning rate changing to $1e-4$.

\textbf{ImageNet.} ImageNet \citep{deng2009imagenet} contains more than 1250k training images and 50k validation images. We crop the images to 224$\times$224 and using the standard augmentation for the training data. We use an SGD optimizer with 0.9 momentum and weight decay $4e-5$. The learning rate is set to 0.1 and cosine decay to 0. We train the SEW-ResNet34 \citep{fang2021deep} with $T=4$ for 120 epochs. As for the Spiking-ResNet34 \citep{zheng2021going}, we use TIT algorithm to train 90 epochs with $T=4$ first, then change the simulation time to 6 and finetune the network for 30 epochs. We adopt an Adam optimizer on the finetune phase and change the learning rate to $1e-4$. TIT algorithm significantly reduces the training time consumption since training the Spiking-ResNet34 is extremely slow.

\textbf{DVS-CIFAR10.} DVS-CIFAR10 \citep{li2017cifar10}, the most challenging mainstream neuromorphic data set, is converted from CIFAR10. It has 10k images with the size 128$\times$128. Following \cite{samadzadeh2020convolutional}, we divide the data stream into 10 blocks by time and accumulate the spikes in each block. Then, we split the dataset into 9k training images and 1k test images and reduce the spatial resolution to 48$\times$48. Random horizontal flip and random roll within 5 pixels are taken as augmentation \citep{lili2022neuromorphic}. We adopt VGGSNN architecture with 300 epochs training on this classification task. And we use an Adam optimizer with the learning rate $1e-3$ and cosine decay to 0. As for the case that does not apply any augmentation, we add a weight decay of 5e-4 to the optimizer.

\subsection{$\mathcal{L}_\text{SDT}$ loss landscape of ResNet-19}
Here we compare the classification loss ($\mathcal{L}_\text{SDT}$) landscapes of ResNet-19 on CIFAR100. The position around the local minimal value found by the SDT ($\mathcal{L}_\text{SDT}$) is very sharp. However, the area around the local minimum found by TET ($\mathcal{L}_\text{TET}$) is much smoother (Figure \ref{fig:resnet19landscape}), which indicates that TET effectively improves the network generalization. Such improvements could be further utilized to other techniques like privacy-preserving data generalization~\citep{kim2021privatesnn} and neural architecture search~\citep{kim2022neural}.

\begin{figure}[h]
    \centering
    \includegraphics[width=0.7\linewidth]{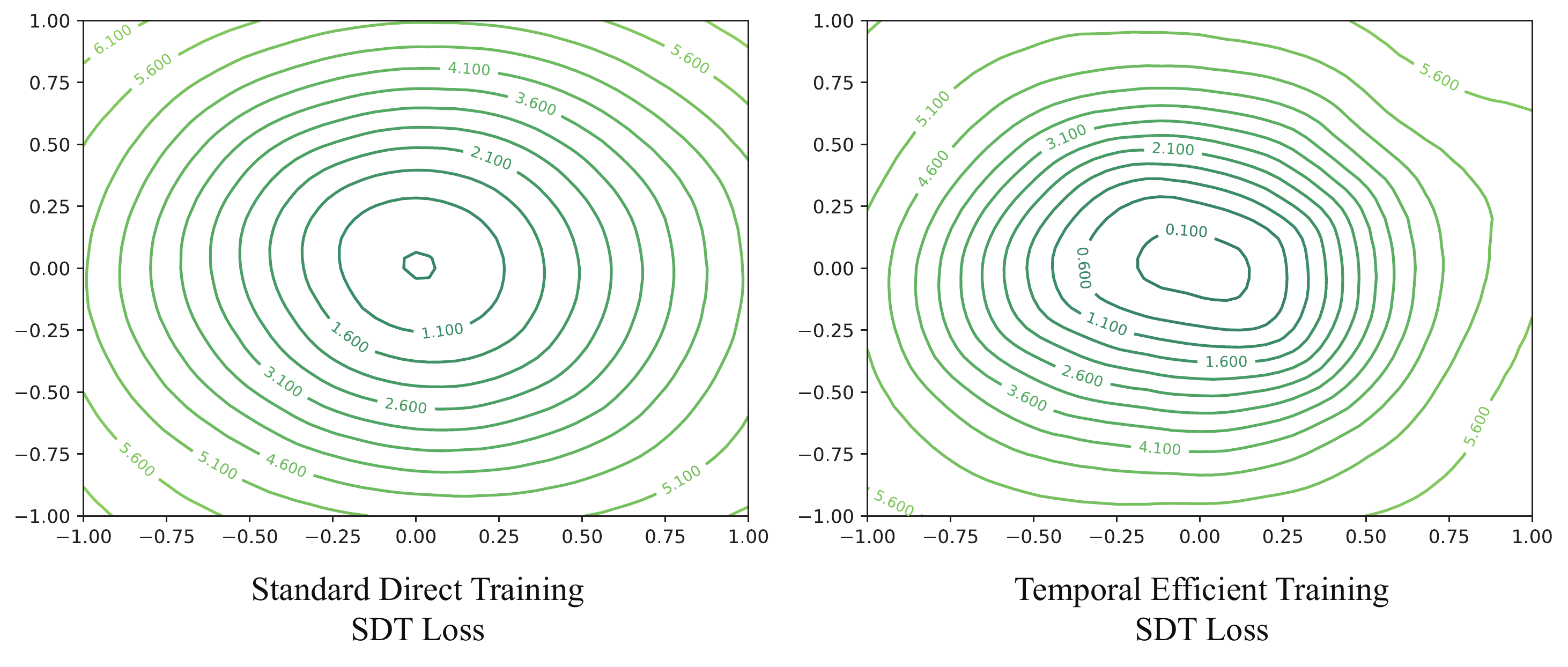}
    \caption{STD loss landscape of ResNet-19 on CIFAR100 from different training approaches.}
    \label{fig:resnet19landscape}
\end{figure}

\subsection{Effect of $\mathcal{L}_\text{MSE}$}
 In this part, we examine the effect of the regular term $\mathcal{L}_\text{MSE}$ with 5 different levels of $\lambda$. Figure \ref{fig:lambdabar} Summarizes the final results. The regular term $\mathcal{L}_\text{MSE}$ effectively increases the performance of both ResNet-19 on CIFAR100 and VGGSNN on DVS-CIFAR10. The static dataset CIFAR100 is more suitable for larger $\lambda$, while smaller $\lambda$ is suitable for DVS-CIFAR10. Theoretically, it is hard to obtain satisfying performance at the early simulation moment due to the sparseness of neuromorphic datasets. So too large regular term $\mathcal{L}_\text{MSE}$ is not suitable for the neuromorphic dataset. Furthermore, we find that a high $\lambda$ may harm the early training phase on ImageNet, especially if zero-initialize \citep{goyal2017accurate} is not performed. As a result, we set $\lambda$ to $5e-2$ for CIFAR10 and CIFAR100, $1e-3$ for ImageNet and DVS-CIFAR10.

\begin{figure}[h]
    \centering
    \includegraphics[width=0.7\linewidth]{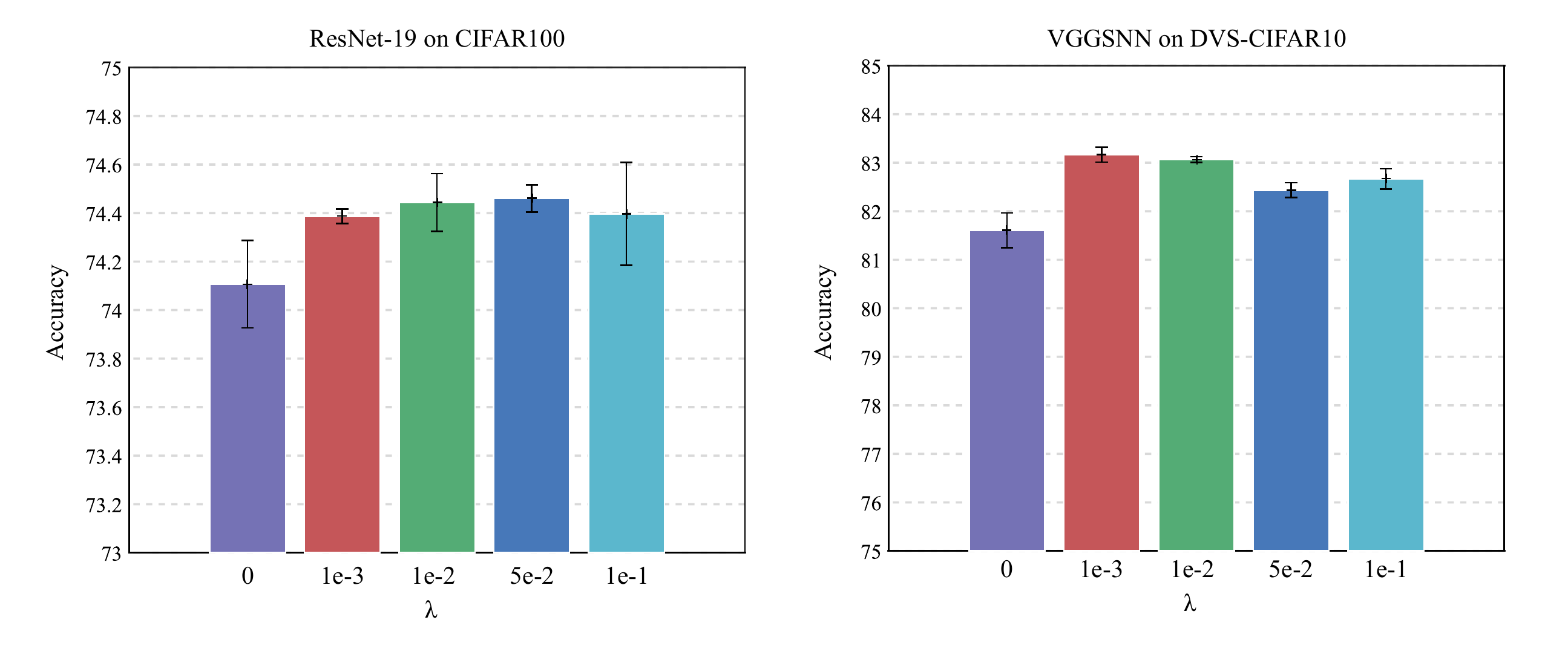}
    \caption{The accuracy under different levels of $\lambda$.}
    \label{fig:lambdabar}
\end{figure}

\subsection{statistical results}
Here we provide statistical results (Figure \ref{fitacc}) to prove that the total SNN accuracy is positively associated with every average of moment's output test accuracy. We train CNN-5 on CIFAR10 for a total of 20 runs with SDT and 5 runs with TET.

\begin{figure}[h]
    \centering
    \includegraphics[width=0.5\linewidth]{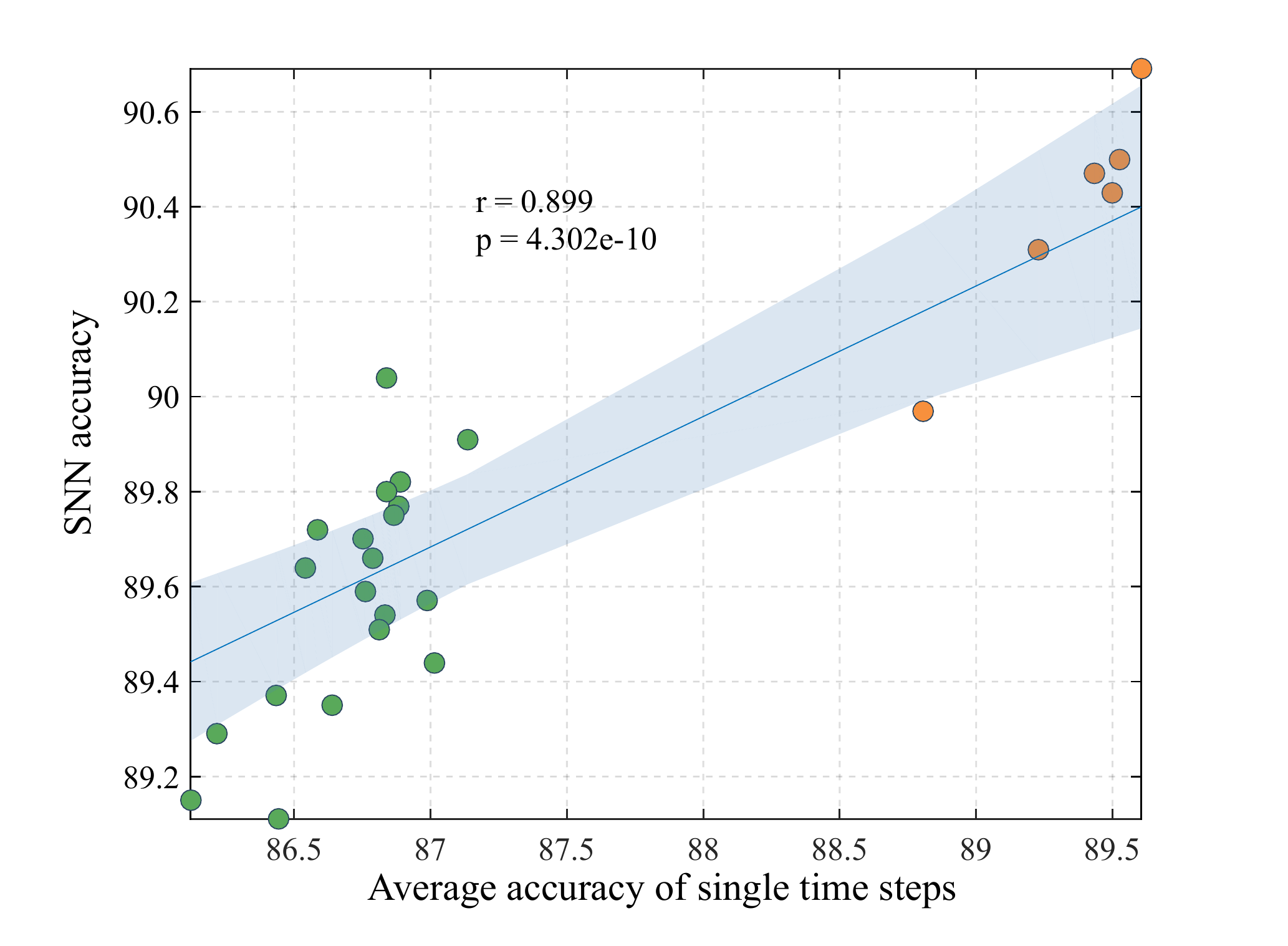}
    \caption{Statistical results. The overall performance of SNN is highly positively associated with the average accuracy of each moment. The standard training obtains the green dots, while the red dots are trained by the TET method.}
    \label{fitacc}
\end{figure}

\subsection{Time scalability robustness of SDT and TET.}
Here we first show the test accuracy (ResNet19 on CIFAR100) of the membrane potential increment at each moment instead of the integrated membrane potential. We set the initial simulation length of the SNNs to 3 or 4 and trained them for a full 300 epochs. Then we expand their simulation length to 8. As shown in table \ref{tab:accuracy_per_moment}, TET ($\mathcal{L}_\text{TET}$) makes the membrane potential increment at each moment have a higher classification ability than SDT ($\mathcal{L}_\text{SDT}$). And TET (1.41 and 0.08) also acquires a low accuracy variance than SDT (3.81 and 4.04).

Then we compare the time scalability robustness between SDT ($\mathcal{L}_\text{SDT}$) and TET ($\mathcal{L}_\text{TET}$). We set the initial simulation length of ResNet19 SNNs to 2, 3, 4 and train with SDT or TET. Then we gradually increase SNN simulation length to 64 and record test accuracy of the integrated membrane potential. As we increase the simulation length, all the SNNs' accuracy will first increase and then be stable in a certain area. Meanwhile, TET (1.80) has a small accuracy variance than the SDT (11.13) after increasing the simulation length. This phenomenon indicates that the initialization steps of TIT only need a small simulation length SNN for TET but a sufficiently large simulation (or enough epochs for finetuning step) for SDT.

\begin{table}[h]
     \caption{Accuracy of each moment's membrane potential increment. We use $\mathcal{L}_\text{SDT}$ or $\mathcal{L}_\text{TET}$ to train the networks with simulation length 3 or 4. Then directly increase their simulation length to 8 and record each moment's potential increment test accuracy.}
    \centering
    \begin{tabular}{ccccccccc}
        \hline
        Method & T=1 & T=2 & T=3 & T=4 &T=5 &T=6 &T=7 &T=8\\
        \hline
        SDT (T=3) & 55.61 & 57.95 & 56.87 & 55.09 & 57.56 & 53.54 & 57.72 & 54.04 \\
        SDT (T=4) & 37.96 & 61.78 & 55.03 & 56.64 & 57.47 & 54.24 & 58.74 & 55.48 \\
        TET (T=3) & 65.97 & 72.22 & 71.78 & 70.55 & 71.90 & 69.57 & 72.15 & 69.78 \\
        TET (T=4) & 62.17 & 71.57 & 71.05 & 72.08 & 71.77 & 71.23 & 71.81 & 71.36 \\
        \hline
    \end{tabular}
    \label{tab:accuracy_per_moment}
\end{table}
\begin{figure}
    \centering
    \includegraphics[width=1.0\linewidth]{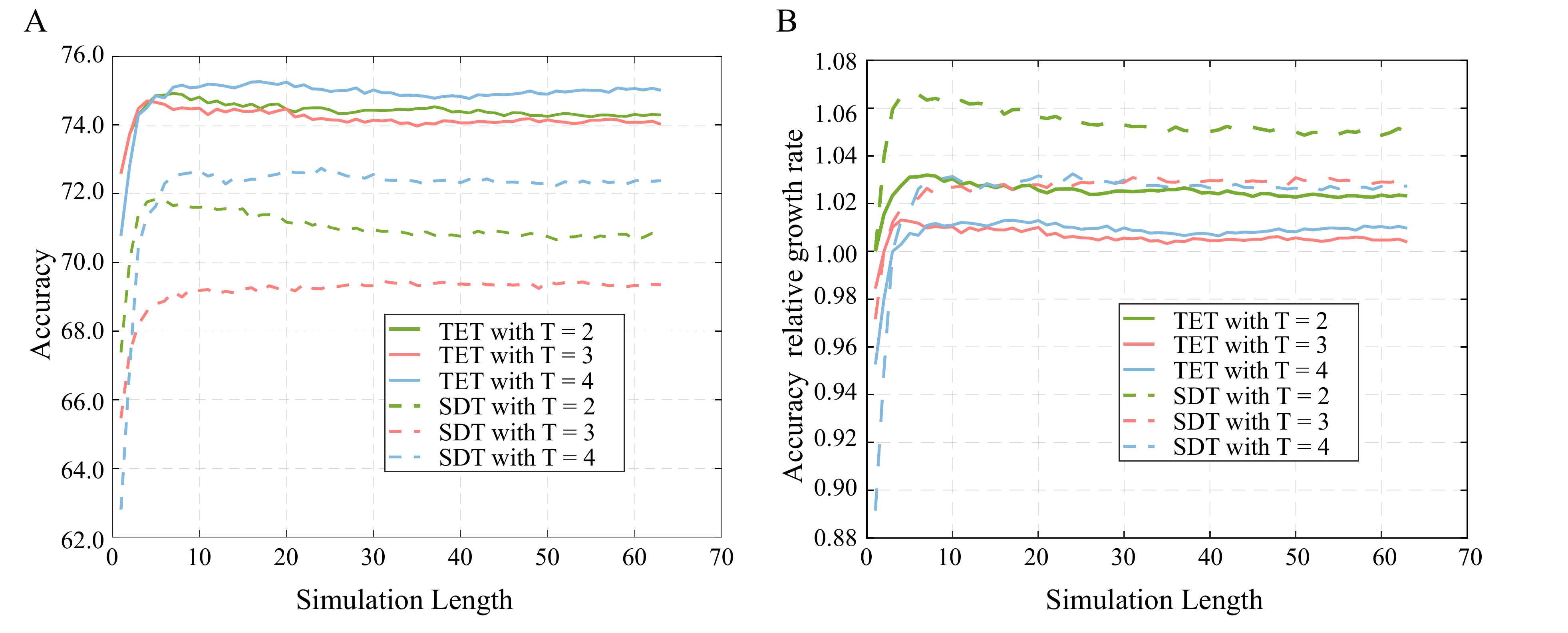}
    \caption{The accuracy after increasing the simulation length. We first train the SNN with TET (only use $\mathcal{L}_\text{TET}$) and SDT ($\mathcal{L}_\text{SDT}$) with simulation length (T) is 2, 3, or 4. Then, we increase the simulation to 64 without finetuning and record the test the classification accuracy (A) and the accuracy relative growth rate (B) of the total SNN output (integrate membrane potential) at each simulation time.}
    \label{xxx}
\end{figure}

\end{document}